\begin{document}

\title{Watersheds on edge or node weighted graphs "par l'exemple"}
\titlerunning{Watersheds on edge or node weighted graphs}

\author{Fernand Meyer}
\authorrunning{F. Meyer} % abbreviated author list (for running head)
%
%%%% list of authors for the TOC (use if author list has to be modified)
\tocauthor{Fernand Meyer (CMM)}
\institute{CMM-Centre de Morphologie Math\'{e}matique,
\\ Math\'{e}matiques et Syst\`{e}mes, MINES ParisTech, France
\email{fernand.meyer@mines-paristech.fr} }

\maketitle              % typeset the title of the contribution

\begin{abstract}
Watersheds have been defined both for node and edge weighted graphs. We show
that they are identical: for each edge (resp.\ node) weighted graph exists a
node (resp. edge) weighted graph with the same minima and catchment basin.\ 
\end{abstract}

\section{Introduction}

The watershed is a versatile and powerful segmentation tool.\ Its use for
segmentation is due to Ch.Lantu\'{e}joul and S.Beucher \cite{beucher79}.\ It
may be applied on an image considered as a topographic surface
\cite{beucher79}, \cite{meyer91}, \cite{beucher79}.\ An image may be
considered as a node weighted graph ; the nodes are the pixels of the image,
weighted by their grey tone ; the edges connect neighboring nodes and are not
weighted.\ .\ Or the watershed may be applied on an edge weighted graph, such
as a region adjacency graph \cite{waterfalls94}, where the nodes are
unweighted and represent the catchment basins, and the edges connecting
neighboring basins are weighted by the altitude of the pass point separating
two basins.\ 

In the first case, one has to find the watershed on a node weighted graphs, in
the second on an edge weighted graph \cite{Coustywshedcut}.\ Definitions and
algorithms are not the same in both worlds, although the have the same
physical inspirations.\ The rain model, where the destiny of a drop of water
falling on the surface defines the catchment basins ; a catchment basin is the
attraction zone of a minimum, i.e. the set of nodes from where a drop of water
may reach this minimum.\ Catchment basins generally overlap.The flooding model
where the relief is flooded from sources placed at the regional minima and
meet for forming a partition. The later method being often implemented as
shortest distance algorithms \cite{meyer94}, \cite{Najman199499}. There is no
opposition between these models as the same trajectory may be followed from
bottom to top, and we have a flooding model or from top to bottom and we have
a rain model.\ A good review on the watershed may be found in
\cite{Roerdink01thewatershed}, and in the recent book \cite{mm2012}.

This paper aims at showing the equivalence between edge or node weighted
graphs for the construction of the watershed.

\section{Graphs}

\subsection{General definitions}

A \textit{non oriented graph} $G=\left[  N,E\right]  $ is a collection $N$ of
vertices or nodes and of edges $E,$ an edge $u\in E$ being a pair of vertices
(see \cite{berge85},\cite{gondranminoux}).

\textit{A chain} of length $n$ is a sequence of $n$ edges $L=\left\{
e_{1},e_{2,}\ldots,e_{n}\right\}  $, such that each edge $e_{i}$ of the
sequence $\left(  2\leq i\leq n-1\right)  $ shares one extremity with the edge
$e_{i-1}$ $(e_{i-1}\neq e_{i})$, and the other extremity with $e_{i+1}$
$(e_{i+1}\neq e_{i})$.

A \textit{path }between two nodes $x$ and $y$ is a sequence of nodes
$(n_{1}=x,n_{2},...,n_{k}=y)$ such that two successive nodes $n_{i}$ and
$n_{i+1}$ are linked by an edge.\ 

\textit{A cycle} is a chain or a path whose extremities coincide.

\textit{A cocycle} is the set of all edges with one extremity in a subset $Y$
and the other in the complementary set $\overline{Y}.$

The subgraph spanning a set $A\subset N$ is the graph $G_{A}=[A,E_{A}]$, where
$E_{A}$ are the edges linking two nodes of $A.$

The partial graph associated to the edges $E^{\prime}\subset E$ is $G^{\prime
}=[N,E^{\prime}].$

\textit{A connected graph }is a graph where each pair of nodes is connected by
a path.

\subsection{Weighted graphs: regional minima and catchment basins}

In a graph $G=\left[  N,E\right]  ,$ edges and nodes may be weighted :
$e_{ij}$ is the weight of the edge $(i,j)$ and $n_{i}$ the weight of the node
$i.$ The weights take their value in the completely ordered lattice
$\mathcal{T}$.

\subsubsection{Edge weighted graphs}

\paragraph{Regional minima}

A subgraph $G^{\prime}$ of an edge weighted graph $G$ is a flat zone, if any
two nodes of $G^{\prime}$ are connected by a chain of uniform altitude.

A subgraph $G^{\prime}$ of a graph $G$ is a regional minimum if $G^{\prime}$
is a flat zone and all edges in its cocycle have a higher altitude.

\paragraph{Catchment basins}

A chain $L=\left\{  e_{1},e_{2,}\ldots,e_{n}\right\}  $ is a \textit{flooding
chain}, if each edge $e_{k}=(n_{k},n_{k+1})$ is one of the lowest edges of its
extremity $n_{k},$ and if along the chain the weigths of the edges is never increasing.\ 

\begin{definition}
The catchment basin of a minimum $m$ is the set of nodes linked by a flooding
chain with a node within $m.\ $
\end{definition}

\subsubsection{Node weighted graphs}

\paragraph{Regional minima}

A subgraph $G^{\prime}$ of a node weighted graph $G$ is a flat zone, if any
two nodes of $G^{\prime}$ are connected by a path along which all nodes have
the same altitude.

A subgraph $G^{\prime}$ of a graph $G$ is a regional minimum if $G^{\prime}$
is a flat zone and all neighboring nodes have a higher altitude.

\paragraph{Catchment basins}

A \textit{flooding path} between two nodes is a path along which the weigths
of the nodes is never increasing.\ 

\begin{definition}
The catchment basin of a minimum $m$ is the set of nodes linked by a non
ascending path with a node within $m.\ $
\end{definition}

\begin{remark}
The definition of the catchment basins is the loosest possible, compatible
with the physical inspiration of the rain model : a drop of water falling on a
surface cannot go upwards.\ With this definition, the same node may belong to
various catchment basins.\ In other words there are large overlapping zones of
the catchment basins. As most algorithm aim at producing a partition, they
propose various methods for suppressing these overlapping zones.
\end{remark}

\section{Outline of the method}

We want to show the equivalence of the watershed on node and edge weighted
graphs.\ We first present the outline of the method on the two simple graphs
in fig.\ref{wshdex1}, the left one being edge weighted and the right being
node weighted.\ %

%TCIMACRO{\FRAME{ftbpFU}{4.6583in}{1.2296in}{0pt}{\Qcb{On the left an edge
%weighted graph transformed into a flooding graph. On the right a node weighted
%graph transformed into a flooding graph.\ }}{\Qlb{wshdex1}}{wshdex1.eps}%
%{\special{ language "Scientific Word";  type "GRAPHIC";
%maintain-aspect-ratio TRUE;  display "USEDEF";  valid_file "F";
%width 4.6583in;  height 1.2296in;  depth 0pt;  original-width 7.3499in;
%original-height 1.9062in;  cropleft "0";  croptop "1";  cropright "1";
%cropbottom "0";  filename 'wshdex1.eps';file-properties "XNPEU";}} }%
%BeginExpansion
\begin{figure}
[ptb]
\begin{center}
\includegraphics[
height=1.2296in,
width=4.6583in
]%
{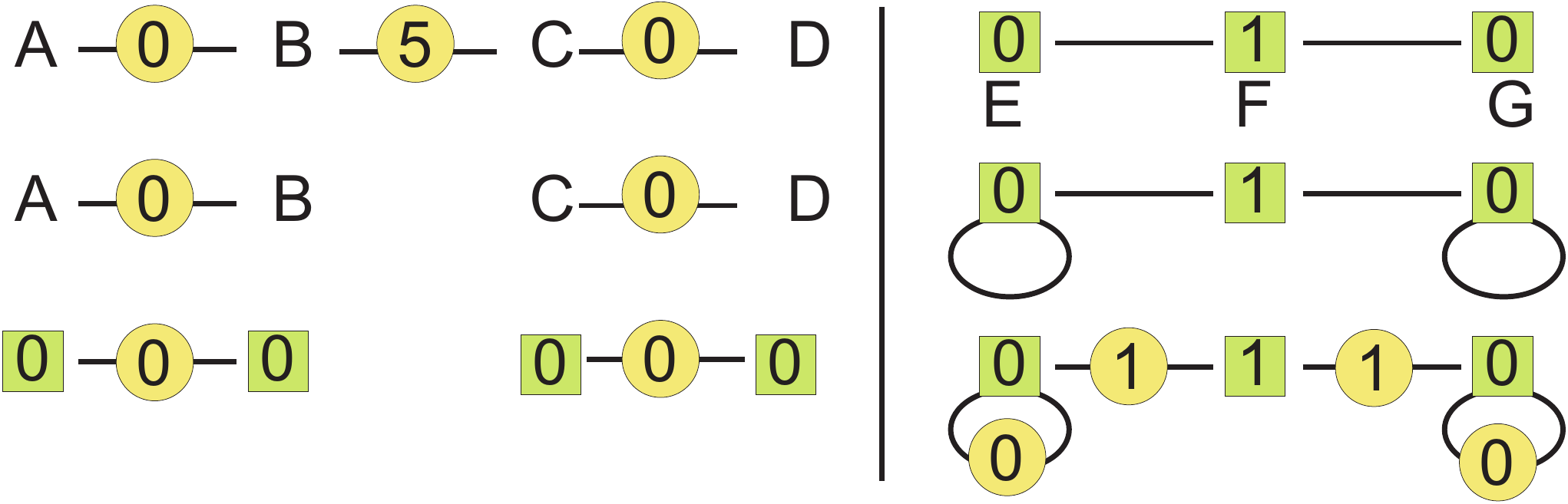}%
\caption{On the left an edge weighted graph transformed into a flooding graph.
On the right a node weighted graph transformed into a flooding graph.\ }%
\label{wshdex1}%
\end{center}
\end{figure}
%EndExpansion

Consider first the edge weighted graph. It has 4 nodes A,B,C\ and \ D
separated by weighted edges.\ In a flooding chain, each node is linked with
the next node by one of its lowest edges.\ For this reason, if an edge is not
the lowest edge of one of its extremities, it will never be crossed by a
flooding chain. It is the case for the edge BC: the lowest adjacent edge of B
is AB and the lowest adjacent edge of \ C is CD. For this reason, this edge BC
can be suppressed from the graph (as presented in the second line), without
modifying the flooding chains of the original graph.\ In a last step we assign
weights to the nodes: each node gets the weight of its lowest adjacent edge,
as represented in the third line\ .he resulting graph is called flooding
graph. In our case, it has two regional minima, which are identical if one
considers them from the point of view of the edge weights or the node
weights.\ For this simple graph, they constitute the catchment basins.\ 

Consider now the node weigthed graph on the right of fig.\ref{wshdex1}\ . It
has two isolated regional minima. One adds a loop edge linking each isolated
node with itself as shown in fig.\ref{wshdex1}.\ This modification does not
change the flooding paths of the initial graph. The last step consists in
assigning, as weight to each edge, the maximal weight of its extremities. The
added loops algo get an edge weight.\ As a result we get again a flooding
graph with the following features:

\begin{itemize}
\item the edges spanning the regional minima of the node weighted graph are
the regional minima of the edge weighted graph.\ 

\item the lowest adjacent edge of a node has the same weight as this
node.\ For this reason each flooding path of the node weighted graph is
simultaneously a flooding chain of the edge weighted graph.\ 
\end{itemize}

Having an identity between the minima and between flooding paths and chains,
the catchment basins of both graphs are the same. In our case we have two
basins with an overlapping zone containing the node F$.$

\section{The flooding graph}

\subsection{The flooding adjunction}

We define two operators between edges and nodes :\newline- an erosion $\left[
\varepsilon_{en}n\right]  _{ij}=n_{i}\wedge n_{j}$ and its adjunct dilation
$\left[  \delta_{ne}e\right]  _{i}=\underset{(k\text{\thinspace
neighbors\thinspace of\thinspace}\,i)}{\bigvee e_{ik}}$\newline- a dilation
$\left[  \delta_{en}n\right]  _{ij}=n_{i}\vee n_{j}$ and its adjunct erosion
$\left[  \varepsilon_{ne}e\right]  _{i}=\underset{(k\text{\thinspace
neighbors\thinspace of\thinspace}\,i)}{\bigwedge e_{ik}}$

The pairs $(\varepsilon_{ne},\delta_{en})$ and $(\varepsilon_{en},\delta
_{ne})$ are adjunct operators.\ The pairs $(\varepsilon_{ne},\delta_{ne})$ and
$(\varepsilon_{en},\delta_{en})$ dual operators.

We call the first pair \textbf{flooding adjunction }as we may give it a
physical explanation.\ Let us consider a region adjacency graph of a
topographical surface, where $n_{i}$ and $n_{j}$ represent the flood level in
the basins $i$ and $j,$ and $e_{ij}$ represents the altitude of the pass pont
between both basins.\ Then:\newline* the altitudes of the nodes $i$ and $j,$
the lowest flood covering $i$ and $j$ has the altitude $\left[  \delta
_{en}n\right]  _{ij}=n_{i}\vee n_{j}$\newline* if $i$ represents a catchment
basin, $e_{ik}$ the altitude of the pass points with the neighboring basin
$k,$ then the highest level of flooding without overflow through an adjacent
edge is $\left[  \varepsilon_{ne}e\right]  _{i}=\underset{(k\text{\thinspace
neighbors\thinspace of\thinspace}\,i)}{\bigwedge e_{ik}}.$

As $\varepsilon_{ne}$ and $\delta_{en}$ are adjunct operators, the operator
$\varphi_{n}=\varepsilon_{ne}\delta_{en}$ is a closing on $n$ and $\gamma
_{e}=\delta_{en}\varepsilon_{ne}$ is an opening on $e$.\smallskip

\subsection{The opening $\gamma_{e}$}

We consider first an edge weighted graph $G_{e}$\ and study the effect of the
opening $\gamma_{e}$ on its edge weights.\ Fig.\ref{wfal22} presents from left
to right: 1) an edge weighted graph, , 2) the result of the erosion
$\varepsilon_{ne}$, 3) the subsequent dilation producing an opening. The edges
in red are those whose weight has been reduced by the opening.\ The others,
invariant by the opening\textbf{\ }$\gamma_{e}$ are the edges which are, as we
establish below, the lowest edge of one of their extremities. Two
possibilities exist for an edge $(i,j)$ with a weight $\lambda:$\newline* the
edge $(i,j)$ has lower neighboring edges at each extremity. Hence
$\varepsilon_{ne}(i)<\lambda$ and $\varepsilon_{ne}(j)<\lambda$ ; hence
$\gamma_{e}=\delta_{en}\varepsilon_{ne}(i,j)=\varepsilon_{en}(i)\vee
\varepsilon_{en}(j)<\lambda:$ the edge $(i,j)$ is not invariant by the opening
$\gamma_{e}$\newline* the edge $(i,j)$ is the lowest edge of the extremity
$i.$ Then $\varepsilon_{ne}(i)=\lambda$ and $\varepsilon_{ne}(j)\leq\lambda$ ;
hence $\gamma_{e}=\delta_{en}\varepsilon_{ne}(i,j)=\varepsilon_{en}%
(i)\vee\varepsilon_{en}(j)=\lambda:$ the edge $(i,j)$ is invariant by the
opening $\gamma_{e}$

\begin{figure}[ptb]
\begin{center}
\includegraphics[
natheight=1.990800in,
natwidth=3.164100in,
height=1.37in,
width=2.18in
]{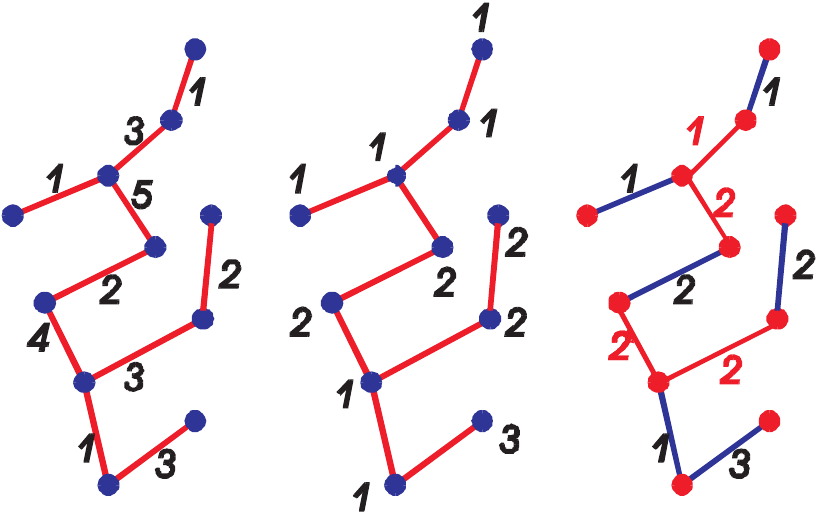}
\end{center}
\caption{From left to right: 1) an edge weighted graph, in the centre, 2) the
result of the erosion $\varepsilon_{ne}$, 3) the subsequent dilation produces
an opening. The edges in red are those whose weight has been reduced by the
opening as they were not the lowest edge of one of their extremities in the
initial distribution of edge weights. }%
\label{wfal22}%
\end{figure}

Hence the edges which are invariant by $\gamma_{e}$ are all edges which are
the lowest edges for one of their extremities. The operator keeping for each
node only its lowest adjacent edges is written $\downarrow\ :G\rightarrow
\ \downarrow G.\ $As each node has at least one lowest neighboring edge, the
resulting graph $\downarrow G$ spans all the nodes. The resulting graph also
contains all flooding chains of the initial graph (since in a flooding chain,
each edge is the lowest edge of one of its extremities).\ 

\subsubsection{The regional minima of a graph $G_{e}$ invariant by the opening
$\gamma_{e}$}

Consider an edge weighted graph $G_{e}$ invariant by the opening $\gamma_{e}.$
We assign to the nodes the weights $\varepsilon_{ne}.\ $We call $G_{n}$ the
graph on which one only considers the node weights.\ 

\begin{theorem}
If an edge weighted graph $G_{e}$ is invariant by the opening $\gamma_{e},$
then its regional minima edges span the regional minima nodes of $G_{n}.$
\end{theorem}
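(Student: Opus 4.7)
The plan is as follows. A regional minimum $M$ of the edge-weighted graph $G_e$ is, by definition, a subgraph whose internal edges all share some altitude $\lambda$ and whose cocycle consists entirely of edges of weight strictly greater than $\lambda$; write $V(M)$ for the set of nodes it spans. To prove the theorem I will show that $V(M)$ satisfies the three defining properties of a regional minimum of $G_n$: all its nodes carry the same weight, they form a node-flat zone, and every node outside $V(M)$ adjacent to it carries a strictly larger weight.

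The first two properties I expect to be immediate. For any $i\in V(M)$, every incident edge is either in $M$ (hence of weight $\lambda$) or in the cocycle of $M$ (hence of weight $>\lambda$), so $n_i=\varepsilon_{ne}(e)_i=\bigwedge_k e_{ik}=\lambda$. Since $M$ is itself an edge-flat zone, any two nodes of $V(M)$ are joined by a chain of edges of $M$ all at altitude $\lambda$; the nodes traversed by such a chain all lie in $V(M)$ and hence all carry node-weight $\lambda$, giving a flat zone in the node-weighted sense.

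The heart of the proof is the third property, which is where the hypothesis of invariance under $\gamma_e$ must be used. Pick a cocycle edge $(i,j)$ with $i\in V(M)$ and $j\notin V(M)$, so that $e_{ij}>\lambda$. Invariance reads $e_{ij}=\varepsilon_{ne}(e)_i\vee\varepsilon_{ne}(e)_j$, and since both operands are dominated by $e_{ij}$, at least one of them has to equal $e_{ij}$ — in other words $(i,j)$ is one of the lowest edges at $i$ or at $j$. But $i$ already carries an edge of $M$ of weight $\lambda$, so $\varepsilon_{ne}(e)_i\leq\lambda<e_{ij}$, which rules out the endpoint $i$. Consequently $n_j=\varepsilon_{ne}(e)_j=e_{ij}>\lambda$, as required.

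The main obstacle, and the reason invariance is indispensable, is precisely the third step: without it, a neighbor $j$ of $V(M)$ could a priori possess some low edge leading away from $M$ and thereby acquire $n_j\leq\lambda$, breaking the strict inequality needed for $V(M)$ to be a regional minimum in $G_n$. Invariance excludes this by forcing the high-weight cocycle edge incident to $j$ to realise $j$'s own infimum of adjacent weights. Once that observation is in place the argument is a short bookkeeping exercise based on the characterisation of $\gamma_e$-invariance established in the preceding subsection.
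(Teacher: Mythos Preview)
Your proof is correct and follows essentially the same route as the paper's: both show that every node of the edge-minimum $M$ receives node-weight $\lambda$, and both use the $\gamma_e$-invariance characterisation (each edge is the lowest edge of one of its extremities) to force a cocycle edge $(i,j)$ with $i\in V(M)$ to be the lowest edge of $j$, whence $n_j=e_{ij}>\lambda$. You are slightly more explicit than the paper in verifying the flat-zone connectivity and in ruling out the extremity $i$, but the argument is the same.
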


\textbf{Proof: }A regional minimum $m$ of the graph $G_{e}$ is a plateau of
edges with altitude $\lambda$, with all adjacent edges in the cocycle having a
weight higher than $\lambda.\ $If a node $i$ belongs to this regional minimum,
its adjacent edges have a weight $\geq\lambda$ but it has at least one
neighboring edge with weight $\lambda$ : hence the weight of $i$ is $\left(
\varepsilon_{ne}e\right)  _{i}=\lambda.$ Consider now an edge $(s,t),$ with
$s$ inside the regional minimum and $t$ outside.\ Then $e_{st}>\lambda$.\ As
$G\ $is invariant by $\gamma_{e},$ the edge $(s,t)$ is one of the lowest edges
of the nodes $t$ : thus the weight of $t$ is $\left(  \varepsilon
_{ne}e\right)  _{t}=e_{st}>\lambda.\ $This shows that the nodes spanned by the
regional minimum $m$ form a regional minimum of the graph $G_{n}.$

\subsection{The closing $\varphi_{n}$}

Consider now a node weighted graph $G.\ $The closing $\varphi_{n}$ is obtained
by a dilation $\delta_{en}$ of the node weights followed by an erosion
$\varepsilon_{ne}.\ $

\begin{lemma}
The closing $\varphi_{n}$ replaces each isolated node constituting a regional
minimum by its lowest neighboring node and leaves all other nodes unchanged.
\end{lemma}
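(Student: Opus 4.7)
The plan is to compute $[\varphi_n n]_i$ explicitly from the definitions of $\delta_{en}$ and $\varepsilon_{ne}$, and then to do a short case analysis around each node $i$ based on whether it is an isolated regional minimum.

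First I would expand the composition. Since $\mathcal{T}$ is totally ordered, hence distributive, for every node $i$
\[
[\varphi_n n]_i \;=\; [\varepsilon_{ne}\delta_{en}n]_i \;=\; \bigwedge_{k\sim i}(n_i\vee n_k) \;=\; n_i \,\vee\, \bigwedge_{k\sim i} n_k,
\]
where $k\sim i$ means $k$ is a neighbor of $i$. So $\varphi_n$ acts pointwise: it raises $n_i$ up to the altitude of its lowest neighbor whenever that altitude exceeds $n_i$, and does nothing otherwise. This formula does the main work of the proof.

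Next I would split into cases. Case~1: $i$ is an isolated node constituting a regional minimum. By the definition of a regional minimum on a node-weighted graph, $\{i\}$ is a (trivial) flat zone with every neighbor strictly higher, so $n_k>n_i$ for all $k\sim i$. Hence $\bigwedge_{k\sim i}n_k>n_i$, and the formula yields $[\varphi_n n]_i=\bigwedge_{k\sim i}n_k$, which is precisely the weight of the lowest neighboring node. Case~2: $i$ is \emph{not} an isolated regional minimum. I would observe that this covers two subcases — either $i$ belongs to a regional minimum containing at least two nodes, in which case $i$ has a neighbor $k_0$ in the same flat zone with $n_{k_0}=n_i$, or $i$ does not lie in any regional minimum, in which case some neighbor $k_0$ satisfies $n_{k_0}\leq n_i$. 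In both subcases $\bigwedge_{k\sim i}n_k\leq n_i$, so the formula gives $[\varphi_n n]_i=n_i$ and $i$ is left unchanged.

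The only real obstacle is the bookkeeping in Case~2, making sure that the negation of ``isolated regional minimum'' genuinely forces the existence of a neighbor of altitude at most $n_i$; once one separates that into the two subcases above, each is immediate from the definition of a regional minimum. The distributive identity used at the start is a one-line consequence of $\mathcal{T}$ being totally ordered and is what reduces the whole lemma to a direct comparison of $n_i$ with $\bigwedge_{k\sim i}n_k$.
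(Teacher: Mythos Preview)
Your proof is correct and follows essentially the same case analysis as the paper's own argument (isolated regional minimum versus not), tracking the effect of $\delta_{en}$ followed by $\varepsilon_{ne}$. The minor differences are in your favor: you compress the two-step computation into the single identity $[\varphi_n n]_i = n_i \vee \bigwedge_{k\sim i} n_k$ via distributivity, and your Case~2 explicitly covers nodes lying in no regional minimum at all, whereas the paper's proof, as written, only treats nodes inside a non-isolated regional minimum and leaves the remaining ``other nodes'' implicit.
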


\begin{proof}
\ Consider a node with a weight $\lambda$ belonging to a regional
minimum:\newline* consider the case where the node $i$ is an isolated regional
minimum.\ Then $\delta_{en}$ assigns to all edges adjacent to $i$ a weight
bigger than $\lambda.\ $The subsequent erosion $\varepsilon_{ne}$ assigns to
$i$ the smallest of these weights, which is the weight of the smallest
neighbor.\ The node $i$ is not invariant by the closing $\varphi_{n}.$%
\newline* Suppose that $i$ belongs to a regional minimum which is not
isolated. The dilation $\delta_{en}$ assigns to each edge adjacent to $i$ a
weight $\geq\lambda.\ $If $i$ has a a neighbor $j$ with a weight $\mu
\leq\lambda,.$then $\delta_{en}$ assigns to the edge $(i,j)$ the weight
$\lambda.\ $The subsequent erosion $\varepsilon_{ne}$ assigns to $i$ the
smallest of these weights, that is $\lambda.$ The node $i$ is invariant by the
closing $\varphi_{n}.$
\end{proof}

Hence if $G$ has isolated regional minima, it is not invariant by $\varphi
_{n}.\ $If we add a loop edge linking each isolated regional minimum with
itself we obtain a graph invariant by $\varphi_{n}.\ $Indeed, if $i$ is an
isolated regional minimum with weight $\lambda,$ we add a loop edge $(i,i)$ ;
the dilation $\delta_{en}$ assigns to the loop the weight $\lambda$ and to all
other edges adjacent to $i,$ a weight $>\lambda.\ $The subsequent erosion
$\varepsilon_{ne}$ assigns to $i$ the smallest of these weights, which is the
weight of the loop, i.e.\ $\lambda.$

We write $\circlearrowright:G\rightarrow\ \circlearrowright G$ the operator
which adds to a node weighted graph a loop between each isolated regional
minimum and itself.\ 

\subsubsection{The regional minima of a graph $G_{n}$ invariant by the closing
$\varphi_{n}$}

Consider a node weighted graph $G_{n}$ invariant by the closing $\varphi_{n}.$
We assign to the edges the weights $\delta_{en}.\ $We call $G_{e}$ the graph
on which one only consider the edge weights.\ 

\begin{theorem}
If $G$ is invariant by the closing $\varphi_{n},$ then the edges spanning the
regional minima nodes of $G_{n}$ form the regional minima edges of $G_{e}.$
\end{theorem}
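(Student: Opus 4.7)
\medskip
\noindent\textbf{Proof plan.}
The plan is to mirror the proof of the dual theorem (for the opening $\gamma_e$): fix a regional minimum $M$ of the node weighted graph $G_n$ at altitude $\lambda$, and show that the edges of $G_e$ spanned by $M$ form a plateau at altitude $\lambda$ whose cocycle is strictly higher.

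First I would unfold the definitions. Since $e_{ij}=[\delta_{en}n]_{ij}=n_i\vee n_j$, every edge $(i,j)$ with both endpoints in $M$ carries weight $n_i\vee n_j=\lambda\vee\lambda=\lambda$. Together with the fact that $M$ is node-connected at altitude $\lambda$, this immediately gives that the induced edge set is a flat zone in the sense of the edge weighted definition: any two such edges can be linked by a chain of edges all valued $\lambda$, obtained from the node-path of constant altitude inside $M$.

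Next I would control the cocycle. Pick an edge $(s,t)$ with $s\in M$ and $t\notin M$. Because $M$ is a node-regional minimum of $G_n$, every neighbor outside $M$ satisfies $n_t>\lambda$, and so $e_{st}=n_s\vee n_t=\lambda\vee n_t=n_t>\lambda$. Hence all cocycle edges are strictly above $\lambda$, which, combined with the previous step, makes the induced subgraph a regional minimum of $G_e$.

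The one delicate case, and the place where the hypothesis that $G$ is invariant by $\varphi_n$ is really used, is the isolated regional minimum: if $M=\{i\}$, then without any edge incident to $i$ of weight $\lambda$ there would be nothing to span the minimum on the edge side. The preceding lemma tells us that an isolated minimum node is not invariant by $\varphi_n$ unless a loop $(i,i)$ is present; invariance therefore guarantees such a loop, and $e_{ii}=n_i\vee n_i=\lambda$ supplies the needed minimum edge, while the argument on the cocycle edges $(i,t)$ is unchanged. Handling this loop bookkeeping cleanly, and making sure the ``flat zone'' condition is still met trivially in the singleton case, is the only subtle point; the rest is a direct lattice computation parallel to the proof given for the theorem about $\gamma_e$.
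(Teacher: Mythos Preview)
Your proof is correct and follows essentially the same route as the paper: compute that internal edges of a node-minimum $M$ at level $\lambda$ receive weight $n_i\vee n_j=\lambda$, while cocycle edges receive $n_s\vee n_t=n_t>\lambda$, so the spanned edges form an edge-regional minimum. The only cosmetic difference is that the paper uses invariance by $\varphi_n$ to assert outright that $M$ has at least two nodes (ruling out isolated minima), whereas you additionally spell out the loop case; both readings are consistent with the preceding lemma and the $\circlearrowright$ construction.
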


\textbf{Proof:} A regional minimum $m$ of $G_{n}$ is a plateau of pixels with
altitude $\lambda$, containing at least two nodes (there are no isolated
regional minima as $G_{n}$ is invariant by $\varphi_{n}$).\ All internal edges
of the plateau get the valuation $\lambda$ by $\delta_{en}n.$ If an edge
$(i,j)$ has the extremity $i$ in the minimum and the extremity $j$ outside,
then $\delta_{en}n(i,j)>\lambda.\ $Hence, for the graph $G_{e},$ the edges
spanning the nodes of $m$ form a regional minimum.\ 

\subsection{The flooding graph}

We consider now a graph $\ G$ on which both nodes and edges are weighted.\ If
we consider only the edge weights we write $G_{e}$ and $G_{n}$ if we consider
only the node weights.\ 

\textbf{Definition: }An edge and node weighted graph $G=[N,E]$ is a flooding
graph iff its weight distribution $(n,e)$ verifies both $\delta_{en}n=e$ and
$\varepsilon_{ne}e=n.$

In a flooding graph, the weight distribution $(n,e)$ verifies $e=\delta
_{en}n=\delta_{en}\varepsilon_{ne}e\newline=\gamma_{e}e$ and $n=\varepsilon
_{ne}e=\varepsilon_{ne}\delta_{en}n=\varphi_{n}n$ showing that $n\in
\operatorname*{Inv}(\varphi_{n})$ and $e\in\operatorname*{Inv}(\gamma_{e}).$

As $G$ is invariant by $\gamma_{e},$ all its edges are the lowest edge of one
of their extremities. And as $G$ is invariant by $\varphi_{n},$ it has no
isolated regional minimum.

We have established earlier that:

\begin{itemize}
\item if a graph $G_{e}$ is invariant by $\gamma_{e},$ then its regional
minima edges span the regional minima nodes of $G_{n}.\ $

\item if a graph $G_{n}$ is invariant by $\varphi_{n},$ then the edges
spanning the regional minima nodes of $G_{n}$ form the regional minima edges
of $G_{e}.$
\end{itemize}

As a flooding graph is both invariant by $\gamma_{e}$ and by $\varphi_{n},$ we
get the following theorem.

\begin{theorem}
\textbf{ }If $G$ is a flooding graph then the node weighted graph $G_{n}$ and
the edge weighted graph $G_{e}$ have the same regional minima subgraph. More
precisely, the regional minima nodes of $G_{n}$ are spanned by the regional
minima edges of $G_{e}.$
\end{theorem}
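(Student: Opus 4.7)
My plan is to derive this theorem as a direct corollary of the two invariance results the paper has just proved, so the main work is to verify that a flooding graph is simultaneously in $\operatorname{Inv}(\gamma_e)$ and $\operatorname{Inv}(\varphi_n)$ and then to combine the two spanning statements into a single identity.

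First I would unpack the definition of a flooding graph. From $\delta_{en}n=e$ and $\varepsilon_{ne}e=n$ we obtain $\gamma_e e=\delta_{en}\varepsilon_{ne}e=\delta_{en}n=e$ and $\varphi_n n=\varepsilon_{ne}\delta_{en}n=\varepsilon_{ne}e=n$, so $e\in\operatorname{Inv}(\gamma_e)$ and $n\in\operatorname{Inv}(\varphi_n)$. This is exactly the observation already made right before the theorem statement; I would quote it explicitly to make the hypotheses of the two earlier theorems visible.

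Next I would invoke the first invariance theorem applied to $G_e$: since $G_e\in\operatorname{Inv}(\gamma_e)$, every regional minimum edge-subgraph of $G_e$ spans a regional minimum node-subgraph of $G_n$. This gives one direction of the correspondence. I would then invoke the second invariance theorem applied to $G_n$: since $G_n\in\operatorname{Inv}(\varphi_n)$, the edges that span any regional minimum node-subgraph of $G_n$ form a regional minimum edge-subgraph of $G_e$. This gives the reverse direction.

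Finally, I would combine the two to conclude that the map sending a regional minimum $m$ of $G_e$ to the node set it spans, and the map sending a regional minimum $M$ of $G_n$ to the edges that span it, are mutually inverse bijections between the regional minima of $G_e$ and those of $G_n$, so the regional minima subgraphs of $G_n$ and $G_e$ coincide as claimed. The only step that requires a bit of care is checking that these two constructions are indeed inverse to each other, which follows because in a flooding graph every edge of a regional minimum $m$ of $G_e$ is the lowest adjacent edge of both its extremities (invariance by $\gamma_e$) and no node-minimum is isolated (invariance by $\varphi_n$), so spanning edges and spanned nodes determine each other without ambiguity. I do not expect any real obstacle beyond this bookkeeping, since the substantive arguments have already been discharged by the two earlier theorems.
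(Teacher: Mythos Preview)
Your proposal is correct and matches the paper's approach exactly: the paper also derives this theorem immediately from the two invariance results by noting that a flooding graph is simultaneously in $\operatorname{Inv}(\gamma_e)$ and $\operatorname{Inv}(\varphi_n)$, and then combining the two spanning statements. Your extra paragraph checking that the two constructions are mutually inverse is a bit more explicit than the paper (which simply asserts the conclusion), but the substance is identical.
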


Fig.\ref{wfal28} presents the same flooding graph, on the left with its edge
weights and on the right with its node weights : they have exactly the same
regional minima.\ \begin{figure}[ptb]
\begin{center}
\includegraphics[
height=1.3361in,
width=2.3359in
]{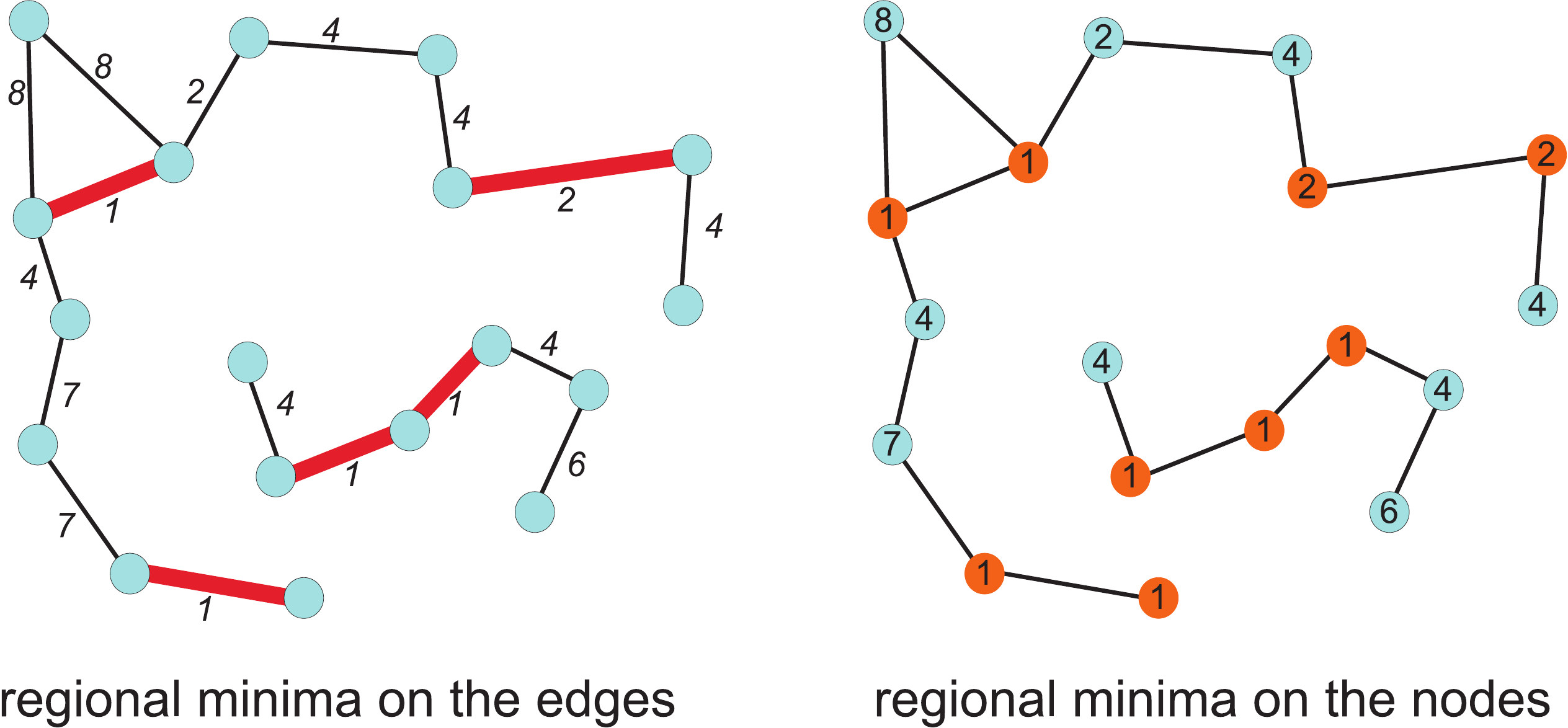}
\end{center}
\caption{Whether one consider the egde weights or the node weights produces
the same regional minima.\ }%
\label{wfal28}%
\end{figure}

\subsubsection{Flooding paths and flooding chains}

Consider a flooding path on $G_{n},$ i.e. a path $(n_{1},n_{2},...n_{k})$ of
neighboring nodes with a non increasing weight, starting at node $n_{1}$ and
ending at node $n_{k}$ belonging to a regional minimum of $G_{n}$. As
$n_{i}\geq n_{i+1}$, the weight of the edge $\left(  n_{i},n_{i+1}\right)  $
obtained by $\delta_{en}$ is equal to the weight of $n_{i}$ and is one of teh
lowest edges of $n_{i}.$ This shows, that the series of edges $\left(
n_{i},n_{i+1}\right)  $ form a flooding chain of $G_{e}$, ending in a regional
minimum of $G_{e}.\ $

Inversely, consider a flooding chain$[(n_{1},n_{2}),(n_{2},n_{3}%
),...(n_{k-1},n_{k})]$ of $G_{e}.$ The weights of the edges are not
increasing.\ Furthermore, the edge $\left(  n_{i},n_{i+1}\right)  $ is the
lowest edge of the node $n_{i}$.\ As in a flooding graph the lowest edge of a
node has the same weight than this node, the edge $\left(  n_{i}%
,n_{i+1}\right)  $ has the same weight than the node $n_{i}.\ $Thus the path
$(n_{1},n_{2},...n_{k})$ also is a flooding path of $G_{n}$ ending a regional
minimum of $G_{n}.$

There is a one to one correspondance between the regional minima of $G_{e}$
and $G_{n}$ ; there is also a one to one correspondance between the flooding
paths and the flooding tracks, with the same weight distribution.\ This shows
that both graphs, $G_{e}$ on which we only consider the edge weights and
$G_{n}$ on which we only consider the node weights, have the same catchment basins.

\subsubsection{Transforming an edge weighted graph into a flooding graph}

Consider an arbitrary edge weighted graph $G_{e}$ with edge weights $e,$ and
without node weights.\ The operator $\downarrow G$ suppresses all edges which
are not invariant by $\gamma_{e}.\ $The remaining edges are invariant by
$\gamma_{e}$ and verify : $e=\delta_{en}\varepsilon_{ne}e.\ $

If we assign to the nodes of $G_{e}$ weights equal to $n=\varepsilon_{ne}e,$
then $e=\delta_{en}\varepsilon_{ne}e=\delta_{en}n,$ and the resulting graph is
a flooding graph.\ 

\paragraph{Illustration}

Fig.\ref{wshdex4}A presents an edge weighted graph $G,$ on which the node
weights are produced by the erosion $\varepsilon_{ne}\ $In fig.\ref{wshdex4}B
a dilation $\delta_{en}$ applied after the erosion produces an opening of the
initial edge weights. The weights with a red color are those which are lowered
by the opening. These edges are suppressed producing the graph $\downarrow G$
in fig.\ref{wshdex4}C.\ The erosion $\varepsilon_{ne}e$ applied on $G$ or on
$\downarrow G$ produces the same node weights.\ The resulting graph is a
flooding graph.

Fig.\ \ref{wshdex4}D shows the regional minima of the complete graph if one
considers only the node weights.\ In contrast, fig.\ \ref{wshdex4}E shows the
regional minima of the flooding graph, if one only considers the node
weights.\ They are not identical.\ 

If we consider only the edge weights of the flooding graph, we write $G_{e}$
and $G_{n}$ if we consider only the node weights.\ One verifies that the edges
spanned by the regional minima nodes of $G_{n}$ in fig.\ \ref{wshdex4}E are
spanned by the regional minima edges of $G_{e}$ in fig.\ref{wshdex4}F. One
also verifies that flooding paths and flooding chains are identical, each node
being followed by an edge with the same weight.\
%TCIMACRO{\FRAME{ftbpFU}{4.6534in}{2.6036in}{0pt}{\Qcb{A: Given an edge
%weighted graph, the node weights are obtained by the erosioin $\varepsilon
%_{ne}e.$\newline B: A subsequent dilation $\delta_{en}$ produces the opening
%$\gamma_{e}e=\delta_{en}\varepsilon_{ne}e.\ $The edges which are not invariant
%by the opening are in red.\newline C: The edges which are not invariant by the
%opening $\gamma_{e}$ are the edges which are not the lowest adjacent edge of
%one of their extremities. The suppression of all these edges produces the
%flooding graph.\newline D: The regional minima of the complete node weighted
%graph do not correspond to the regional minima of the edge weighted graph
%depicted in F.\newline E: The regional minima of node weighted flooding graph
%do correspond to the regional minima of the edge weighted graph depicted in
%F.\newline F: The regional minima of the edge weighted flooding graph.}%
%}{\Qlb{wshdex4}}{wshdex4.eps}{\special{ language "Scientific Word";
%type "GRAPHIC";  maintain-aspect-ratio TRUE;  display "USEDEF";
%valid_file "F";  width 4.6534in;  height 2.6036in;  depth 0pt;
%original-width 15.1647in;  original-height 8.4425in;  cropleft "0";
%croptop "1";  cropright "1";  cropbottom "0";
%filename 'wshdex4.eps';file-properties "XNPEU";}} }%
%BeginExpansion
\begin{figure}
[ptb]
\begin{center}
\includegraphics[
height=2.6036in,
width=4.6534in
]%
{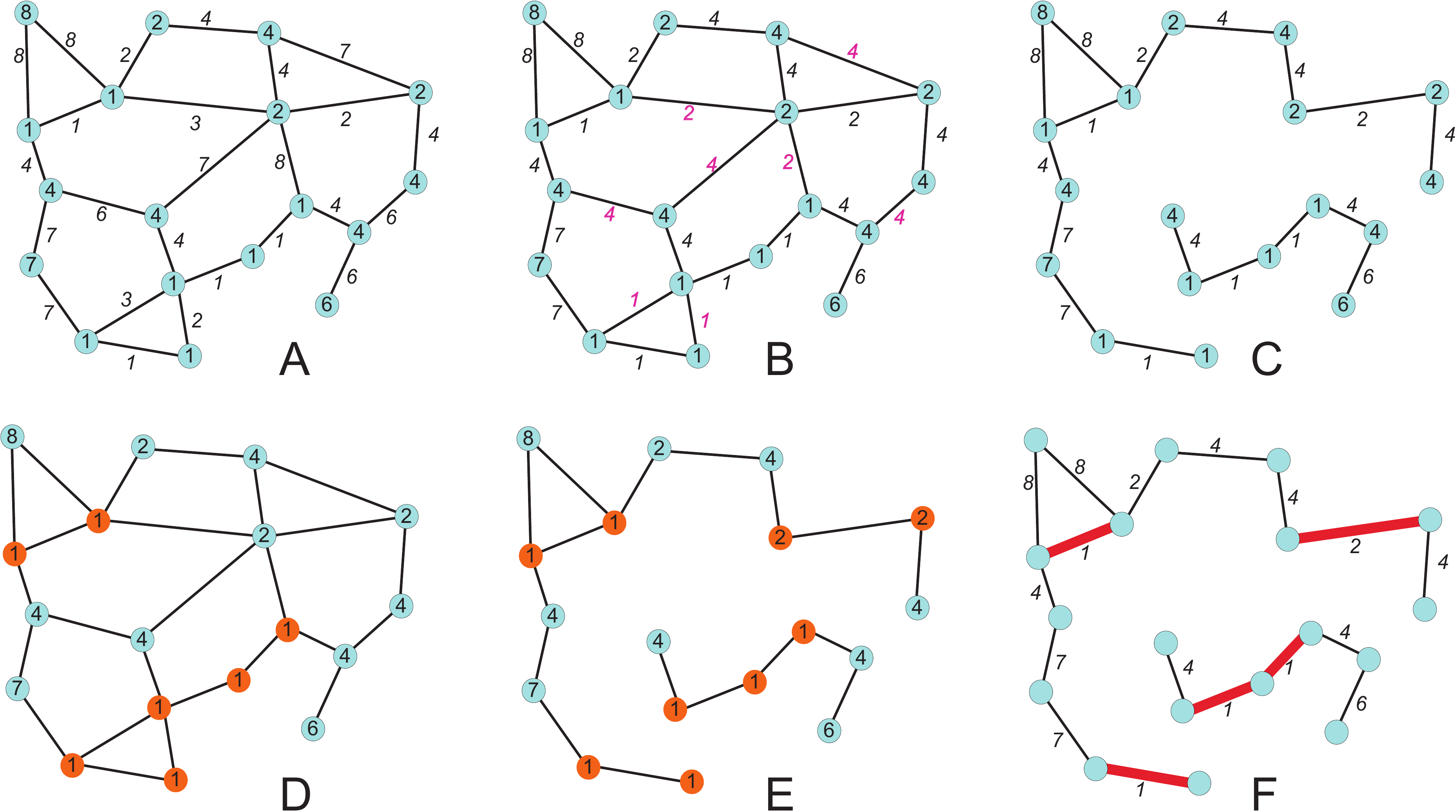}%
\caption{A: Given an edge weighted graph, the node weights are obtained by the
erosioin $\varepsilon_{ne}e.$\newline B: A subsequent dilation $\delta_{en}$
produces the opening $\gamma_{e}e=\delta_{en}\varepsilon_{ne}e.\ $The edges
which are not invariant by the opening are in red.\newline C: The edges which
are not invariant by the opening $\gamma_{e}$ are the edges which are not the
lowest adjacent edge of one of their extremities. The suppression of all these
edges produces the flooding graph.\newline D: The regional minima of the
complete node weighted graph do not correspond to the regional minima of the
edge weighted graph depicted in F.\newline E: The regional minima of node
weighted flooding graph do correspond to the regional minima of the edge
weighted graph depicted in F.\newline F: The regional minima of the edge
weighted flooding graph.}%
\label{wshdex4}%
\end{center}
\end{figure}
%EndExpansion

\subsubsection{Transforming a node weighted graph into a flooding graph}

Consider an arbitrary node weighted graph $G_{n}$with node weights $n,$ and
without edge weights.\ The operator $\circlearrowright G$ adds a loop edge
between each isolated regional minimum and itself, producing a graph invariant
by $\varphi_{n}.\ $The nodes verify $n=\varepsilon_{ne}\delta_{en}n.$

If we assign to the edges of $G_{n}$ weights equal to $e=\delta_{en}n,$ then
$n=\varepsilon_{ne}\delta_{en}n=\varepsilon_{ne}e$ and the resulting graph is
a flooding graph.\ 

\paragraph{Illustration}

Fig.\ref{wshdex7}A presents a node weighted graph.\ It is not invariant by
$\varphi_{n}$ as it has isolated regional minima.\ One adds a loop edge
linking each isolated regional minimum with itself, producing the graph in
fig.\ref{wshdex7}B. The dilation $\delta_{en}$ produces the edge weights.\ The
resulting edge and node weighted graph in \ref{wshdex7}C is a flooding graph.
The regional minima have been highlighted by distinct colors. Again, the
identity between flooding paths and flooding chains is clearly visible.%

%TCIMACRO{\FRAME{ftbpFU}{4.6534in}{2.0822in}{0pt}{\Qcb{A: A node weighted
%graph.\newline B: A loop edge is added, linking each isolated regional minimum
%with itself.\newline C: The edge weights are produced by the dilation
%$\delta_{en}.\ $The resulting graph is a flooding graph.\ }}{\Qlb{wshdex7}%
%}{wshdex7.eps}{\special{ language "Scientific Word";  type "GRAPHIC";
%maintain-aspect-ratio TRUE;  display "USEDEF";  valid_file "F";
%width 4.6534in;  height 2.0822in;  depth 0pt;  original-width 6.1594in;
%original-height 2.7356in;  cropleft "0";  croptop "1";  cropright "1";
%cropbottom "0";  filename 'wshdex7.eps';file-properties "XNPEU";}} }%
%BeginExpansion
\begin{figure}
[ptb]
\begin{center}
\includegraphics[
height=2.0822in,
width=4.6534in
]%
{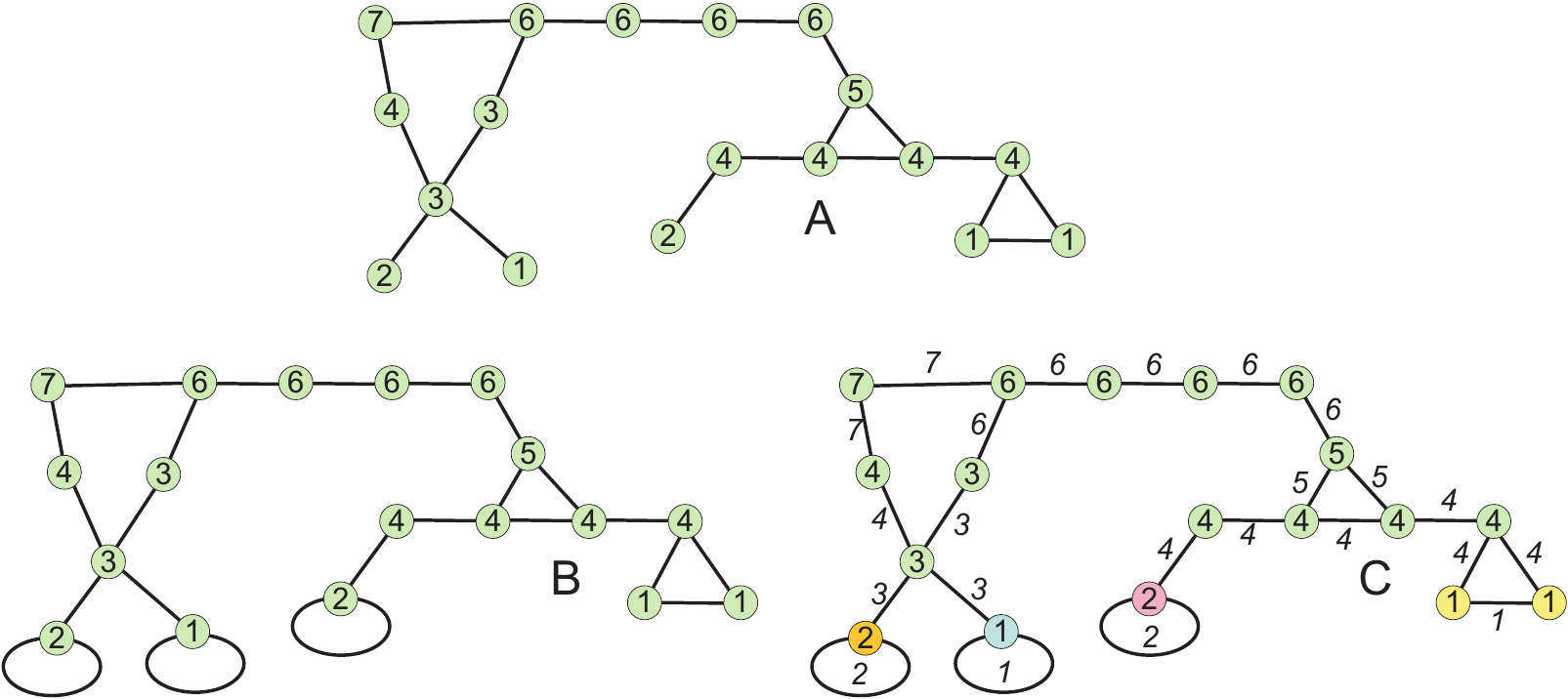}%
\caption{A: A node weighted graph.\newline B: A loop edge is added, linking
each isolated regional minimum with itself.\newline C: The edge weights are
produced by the dilation $\delta_{en}.\ $The resulting graph is a flooding
graph.\ }%
\label{wshdex7}%
\end{center}
\end{figure}
%EndExpansion

\subsection{Temporary conclusion}

We have presented how to derive from any node or edge weighted graph a
flooding graph with the same regional minima.\ To each flooding path
corresponds a flooding chain with the same weight distribution, as each node
is followed by an edge with the same weight.\ The reverse also is true: to
each flooding chain corresonds a flooding path with the same weight distribution.\ 

The catchment basins, as defined earlier, are exactly the same.\ As all non
ascending paths and chains are accepted for defining the catchment basins, we
also have the largest overlaping zones between them.

Two classes of watershed algorithms have been developed, the ones for node
weighted graphs, the others for edge weighted graphs. Since we have a one to
one correspondance between flooding paths and flooding chains, each algorithm
developed for a node (resp. edge) weighted graph may now also be applied for
an edge (.resp node) weighted graph, by applying it on the associated flooding graph.

\section{Reducing the number of catchment basins and the size of their
overlapping zones}

\subsection{Estimating the number of catchment basins}

Fig.\ref{wshdex8}A1 presents a flooding graph with three regional minima.\ The
three associated catchment basins overlap at a node p with weight 4.\ In order
to break the tie, only one edge adjacent to the node $p$ should be kept, and
the others suppressed.\ Three solutions are possible, illustrated by the
figures \ref{wshdex8}A2, B2 and C2.\ There is no means to decide between one
or the other solution if one considers only the weight on one edge.\ If one
considers the first two edges of each flooding path, we obtain a lexicographic
measure, by concatenating the weight of the first and the second, illustrated
in fig.\ \ref{wshdex8}B1.\ There remains now two choices between the edges
with weights $43,$ represented in figures figures \ref{wshdex8}B2 and
C2.\ Considering a third edge along the flooding paths leaves only one choice,
as the edge with the weight $431$ is the lowest.\ The corresponding waterhed
segmentation is presented in fig. \ref{wshdex8}C2. This example shows that we
are able to reduce the number of partitions associated to a flooding graph, if
one considers not only the first neighboring nodes or edges in the flooding
paths or chains, but a number of nodes, ordered in a lexicographic order.\ %

%TCIMACRO{\FRAME{ftbpFU}{3.5118in}{2.8709in}{0pt}{\Qcb{A1 : To each edge is
%assigned its lexicographic distance of depth $1$ to the nearest regional
%minimum. A node with weight $4$ has three adjacent edges with weight
%$4.\ $Keeping only one of them and suppressing the others yields 3 possible
%partitions represented in A2, B2 and C2.\newline B1: To each edge is assigned
%its lexicographic distance of depth $2$ to the nearest regional minimum. The
%node with weight $4$ has now two lowest adjacent edges with a weight
%$43.\ $Keeping only one of them and suppressing the other adjacent edges
%yields 2 possible partitions represented in B2 and C2.\newline C1: To each
%edge is assigned its lexicographic distance of depth $3$ to the nearest
%regional minimum. The node with weight $4$ has only one lowest adjacent edges
%with a weight $431.\ $Keeping only this one of them and suppressing the other
%adjacent edges yields 1 possible partitions represented in C2.}}%
%{\Qlb{wshdex8}}{wshdex8.eps}{\special{ language "Scientific Word";
%type "GRAPHIC";  maintain-aspect-ratio TRUE;  display "USEDEF";
%valid_file "F";  width 3.5118in;  height 2.8709in;  depth 0pt;
%original-width 3.3607in;  original-height 2.7422in;  cropleft "0";
%croptop "1";  cropright "1";  cropbottom "0";
%filename 'wshdex8.eps';file-properties "XNPEU";}} }%
%BeginExpansion
\begin{figure}
[ptb]
\begin{center}
\includegraphics[
height=2.8709in,
width=3.5118in
]%
{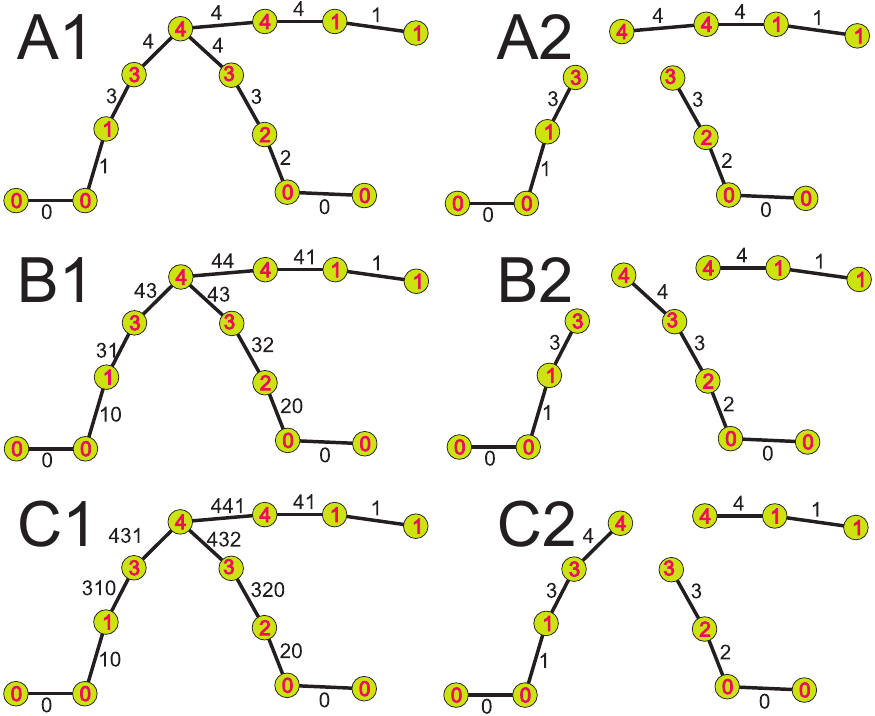}%
\caption{A1 : To each edge is assigned its lexicographic distance of depth $1$
to the nearest regional minimum. A node with weight $4$ has three adjacent
edges with weight $4.\ $Keeping only one of them and suppressing the others
yields 3 possible partitions represented in A2, B2 and C2.\newline B1: To each
edge is assigned its lexicographic distance of depth $2$ to the nearest
regional minimum. The node with weight $4$ has now two lowest adjacent edges
with a weight $43.\ $Keeping only one of them and suppressing the other
adjacent edges yields 2 possible partitions represented in B2 and C2.\newline
C1: To each edge is assigned its lexicographic distance of depth $3$ to the
nearest regional minimum. The node with weight $4$ has only one lowest
adjacent edges with a weight $431.\ $Keeping only this one of them and
suppressing the other adjacent edges yields 1 possible partitions represented
in C2.}%
\label{wshdex8}%
\end{center}
\end{figure}
%EndExpansion

By defining a lexicographic order among the flooding chains of depth $3$ has
permitted to break the ties, leaving only one solution.

\subsection{A lexicographic order relation between downwards paths}

Given a node or weighted graph we first derive its flooding graph $G=[E,N].$
We associate to $G$ an oriented graph $\overrightarrow{G}$ by replacing the
edge $(p,q)$ by an arrow $\overrightarrow{pq}$ if $n_{p}\geq n_{q}\ $and$\ $by
two arrows $\overrightarrow{pq}$ and $\overrightarrow{qp}$ if $n_{p}=n_{q}$.
The loop edge linking an isolated regional minimum node $m$ with itself also
is replaced by an arrow $\overrightarrow{mm}.$ The graph $\overrightarrow{G}$
verifies the property $(P)$: $\forall p\in N,$ there exists at least an
oriented path $\overrightarrow{\pi}$ of $\overrightarrow{G}$ (with a positive
or null length) linking $p$ with a regional minimum.\ We define the catchment
basin of a regional minimum as the set of nodes linked by an oriented path
with this minimum.\ Obviously, each node belongs to at least one catchment
basins. Catchment basins may overlap and form a watershed zone when two paths
having the same node as origin reach two distinct regional minima. We aim at
pruning the graph $\overrightarrow{G},$ without any arbitrary choices, and get
a partial graph $\overrightarrow{G^{\prime}}$ for which the property $(P)$
still holds but the watershed zones are smaller.

As soon the path $\overrightarrow{\pi}$ reaches a regional minimum, it may be
prolonged into a path of infinite length, by infinitely cycling between $2$
nodes within the regional minimum or along the loop joining each isolated
regional minimum with itself.\ All oriented paths or chains are thus of
infinite length.\ And we may consider them, either in their full infinite
length or consider only the first $k$ edges.\ 

We now define a family of preorder relations (order relation without
antisymmetry) between the paths of $\overrightarrow{G}.$

The lexicographic preorder relation of length $k$ compares the infinite paths
$\pi=(p_{1},p_{2},...p_{k},...)$ and $\chi=(q_{1},q_{2},...q_{k},...)$ by
considering the $k$ first nodes and edges:\newline* $\pi\prec^{k}\chi$ if
$n_{p_{1}}<n_{q1}$ or there exists $t<k$ such that
\begin{tabular}
[c]{l}%
$\forall l<t:n_{p_{l}}=n_{q_{l}}$\\
$n_{p_{t}}<n_{q_{t}}$%
\end{tabular}
\newline* $\pi\preceq^{k}\chi$ if $\pi\prec^{k}\chi$ or if $\forall l\leq
k:n_{p_{l}}=n_{q_{l}}.$

This preorder relation is total, as it permits to compare all paths ; for this
reason, among all paths linking a node $p$ with a regional minimum, there
exists always at least one which is the smallest for $\preceq^{k}.$ We say
that this path is the steepest for the lexicographic order of depth $k.$

For $k=\infty,$ we consider the infinite paths and we simply write $\preceq$.

If $\pi$ and$\chi$ are two paths of infinite length verifying $\pi\preceq\chi$
, then the paths $\pi_{l}$ and$\chi_{l}$ obtained by skipping the $l$ first
nodes also verify $\pi_{l}\preceq\chi_{l}.\ $If $\pi$ is the smallest path
linking its origin with a regional minimum, then $\pi_{l}$ is the smallest
path leading from $p_{l+1}$ to the same regional minimum.\ 

\subsubsection{Nested catchment basins}

Consider two lexicographic order relations $\prec^{k}$ and $\prec^{l}$ with
$l>k,$ then for $\pi_{1}$ and $\pi_{2}:$ $\pi_{1}\prec^{k}\pi_{2}%
\Rightarrow\pi_{1}\prec^{l}\pi_{2}$ or equivalently $\pi_{1}\succeq^{l}\pi
_{2}\Rightarrow\pi_{1}\succeq^{k}\pi_{2}:$ the steepest path for the
lexicographic order $l$ also is steepest for the lexicographic order $k$.\ As
a consequence, a catchment basin for $\succeq^{l}$ is included in the
catchment basin for $\succeq^{k}.$

For increasing values of $k,$ the catchment basins become larger, are nested,
and the watershed zones are reduced or vanish. For $k=\infty,$ a node is
linked by two minimal paths with two distinct minima, only if these two paths
have exactly the same weights, which seldom happens in natural images. In
particular, if the regional minima have distinct weights, the catchment basins
form a partition.\ 

\subsubsection{Pruning the flooding graph to get steeper paths}

We associate to each order relation $\preceq^{k}$ of length $k$ a pruning
operator $\downarrow^{k}.\ $The pruning $\downarrow^{k}$ suppresses each edge
which is not the first edge of a steepest path for $\preceq^{k}$ among all
paths with the same origin $.$After pruning, each node outside the regional
minima is the origin of one or several $k-$steepest flooding tracks or
$k-$steepest flooding paths\ We say that the graph $\downarrow^{k}G$ has a
k-steepness or is k-steep. As for $l>k,$ $\pi_{1}\succeq^{l}\pi_{2}%
\Rightarrow\pi_{1}\succeq^{k}\pi_{2}$, we have $\downarrow^{l}G\subset
\downarrow^{k}G$.\ Furthermore $\downarrow^{k}\downarrow^{l}G=\downarrow
^{l}\downarrow^{k}G=\downarrow^{k\vee l}G.$

\begin{remark}
Each pruning $\downarrow^{k}G$ suppresses a number of edges still present in
$\downarrow^{k-1}G$ ; it suppresses them all, without doing any arbitrary
choices between them.
\end{remark}

\paragraph{Particular k-steep graphs}

Applied to an arbitrary graph, the pruning $\downarrow^{1}=\ \downarrow$
suppresses the edges which are not the lowest edge of one of their
extremities.\ In a flooding graph, each edge it the lowest edge of one of its
extremities and $\downarrow^{1}$ is inoperant. The pruning $\downarrow^{2}$
keeps for each node $i$ the adjacent edges linking $i$ with one of its lowest
neighboring nodes. The pruning $\downarrow^{\infty}$ only keeps the first edge
of the steepest paths.

\textbf{Lemma: }Any oriented path in $\downarrow^{k}G$ of length $k$ is of
maximal steepness for $\preceq^{k}.$

For this reason, a node $p$ belongs to a $k-$catchment basin associated to a
node $m$ in a regional minimum, if there exists an oriented path in
$\downarrow^{k}G$ from $p$ to $m.$\smallskip For increasing values of $k,$ the
catchment basins are decreasing, and so are the overlapping zones between them.

\subsection{Erosions, dilations and openings on oriented graphs}

The operator $\downarrow^{k}$ defined above has nice properties but is not a
local operator. It is however possible to implement it using only local
operators as we present now.

\subsubsection{Two adjunctions on oriented graphs}

The adjunctions $(\delta_{en},\varepsilon_{ne})$ and $(\delta_{ne}%
,\varepsilon_{en})$ were defined for non oriented graphs.\ We now define the
equivalent operators for oriented graphs.\ 

The erosion from arrows to nodes assigns to each node $p$ the minimal weight
of all arrows having $p$ as origin: $\left(  \overrightarrow{\varepsilon}%
_{ne}\right)  _{p}=%
%TCIMACRO{\tbigwedge \limits_{q\mid p\rightarrow q}}%
%BeginExpansion
{\textstyle\bigwedge\limits_{q\mid p\rightarrow q}}
%EndExpansion
e_{\overrightarrow{pq}}.$

The dilation is obtained by adjunction.\ Consider a weight distribution $s$ on
the nodes of the oriented graph and a weight distribution $e$ on the arrows.\ 

$%
%TCIMACRO{\tbigwedge \limits_{q\mid p\rightarrow q}}%
%BeginExpansion
{\textstyle\bigwedge\limits_{q\mid p\rightarrow q}}
%EndExpansion
e_{\overrightarrow{pq}}\geq s_{p}\Leftrightarrow\ $for $q\mid p\rightarrow
q:e_{\overrightarrow{pq}}\geq s_{p}\Leftrightarrow e_{\overrightarrow{pq}}\geq%
%TCIMACRO{\tbigvee \limits_{q\mid p\rightarrow q}}%
%BeginExpansion
{\textstyle\bigvee\limits_{q\mid p\rightarrow q}}
%EndExpansion
s_{p}=s_{p}=\left(  \overrightarrow{\delta}_{en}s\right)  _{\overrightarrow
{pq}}$

The dilation $\left(  \overrightarrow{\delta}_{en}s\right)  _{\overrightarrow
{pq}}$ assigns to the arrow $\overrightarrow{pq}$ the weight of its origin
$p.$\ 

We also will need the dual erosion $\left(  \overrightarrow{\varepsilon}%
_{en}n\right)  _{\overrightarrow{pq}}$ assigning to the arrow $\overrightarrow
{pq}$ the weight of its extremity $q$.\ 

\subsubsection{The invariants by the opening}

Consider an arrow $\overrightarrow{pq}.\ \ $The erosion $\overrightarrow
{\varepsilon}_{ne}$ assigns to the node $p$ the minimal weight of all arrows
having $p$ as origin.\ The subsequent dilation assigns to the arrow
$\overrightarrow{pq}$ the weight of $p,$ i.e. the the minimal weight of all
arrows having $p$ as origin.\ Thus if $\left(  \overrightarrow{\gamma}%
_{e}e\right)  _{\overrightarrow{pq}}$ leaves the arrow $\overrightarrow{pq}$
unchanged, it means that this arrow is one of the lowest arrows having $p$ as origin.\ 

We define a pruning operator $\downharpoonright\overrightarrow{G}$ which cuts
all arrows which are not invariant by the opening $\overrightarrow{\gamma}%
_{e},$ i.e. which are not one of the lowest edges of their origin.\ 

\subsubsection{The oriented flooding graphs}

We say that an edge and node weighted graph is an oriented flooding graph if
the weights of the nodes and of the arrows verify:

for the node weights $n_{p}=\left(  \overrightarrow{\varepsilon}_{ne}\right)
_{p}$ and for the edge weights: $e_{\overrightarrow{pq}}=\left(
\overrightarrow{\delta}_{en}n\right)  _{\overrightarrow{pq}}.$ Such a graph is
invariant by the opening $\overrightarrow{\delta}_{en}\overrightarrow
{\varepsilon}_{ne}=\overrightarrow{\gamma}_{e}$ and by the closing
$\overrightarrow{\varepsilon}_{ne}\overrightarrow{\delta}_{en}=\overrightarrow
{\varphi}_{n}.$

\subsection{Pruning a flooding graph with local operators and without
arbitrary choices.\ }

\subsubsection{The pruning operator}

We start with a node and edge weighted flooding graph $G.\ $As explained
above, we associate to $G$ an oriented graph $\overrightarrow{G}$ by replacing
the edge $(p,q)$ by an arrow $\overrightarrow{pq}$ if $n_{p}\geq n_{q}%
\ $and$\ $by two arrows $\overrightarrow{pq}$ and $\overrightarrow{qp}$ if
$n_{p}=n_{q}$. The loop edge linking an isolated regional minimum node $m$
with itself also is replaced by an arrow $\overrightarrow{mm}.$

It is easy to show that if $G$ is a flooding graph, then $\overrightarrow{G}$
is an oriented flooding graph.\ 

In order to identify edges which belong to flooding chains of maximal
lexicographic $k-$steepness, we have to shift the weight distribution of nodes
and weights along the oriented flooding track upwards. This is obtained thanks
to the erosion $\left(  \overrightarrow{\varepsilon}_{en}n\right)
_{\overrightarrow{pq}}$ which assigns to the arrow $\overrightarrow{pq}$ the
weight of its extremity $q.$

After this erosion, we get an edge weight distribution which is not invariant
anymore by the opening $\overrightarrow{\gamma}_{e}.\ $Applying the pruning
operator $\downharpoonright\overrightarrow{G}$ leaves a graph which is
invariant by $\overrightarrow{\gamma}_{e}.\ $A final erosion $\left(
\overrightarrow{\varepsilon}_{ne}\right)  _{p}$ assigns to the nodes their new weights.\ 

Thus we have applied to $\overrightarrow{G}$ the following operators
:$\overrightarrow{\varepsilon}_{en}$ from nodes to arrows, the pruning
operator $\downharpoonright$ on the edges followed by a last erosion
$\overrightarrow{\varepsilon}_{ne}$ from the edges to the nodes, producing
node weights $n=\overrightarrow{\varepsilon}_{ne}e.\ $The resulting graph is
an oriented flooding graph.\ Indeed the edges verify $e=\overrightarrow
{\gamma}_{e}e=\overrightarrow{\delta}_{en}\overrightarrow{\varepsilon}%
_{ne}e=\overrightarrow{\delta}_{en}n.\ $We call $\zeta$ the succession of
these three operators.\ In every day words, the operator $\zeta$ does the
following: each node is assigned the minimal weight of all arrows for which it
is origin ; each arrow is assigned a weight equal to its extremity ; each
arrow with a higher weight than its origin is suppressed.\ 

Every time that we apply $\zeta,$ new edges are pruned and the weight
distribution along the flooding paths and chains moves upwards.\ For
$\zeta^{(k)}$ we obtain a graph $\zeta^{(k)}\overrightarrow{G}$ which contains
only flooding paths and chains of steepness $\geq k+1.$

\subsubsection{Illustration}

\paragraph{Case of an initially edge weighted graph}

The fig. \ref{wshdex9}A represents an edge weighted graph on which the
regional minima edges are indicated in red.\ Fig. \ref{wshdex9}B presents the
associated flooding graph $G$.\ We associate to $G$ an oriented graph
$\overrightarrow{G}$ by replacing the edge $(p,q)$ by an arrow
$\overrightarrow{pq}$ if $n_{p}\geq n_{q}\ $and$\ $by two arrows
$\overrightarrow{pq}$ and $\overrightarrow{qp}$ if $n_{p}=n_{q}$ and get fig.
\ref{wshdex9}C. The operator $\zeta$ produces a new weight distribution for
both nodes and edges in fig. \ref{wshdex9}D.\ Furthermore two arrows are cut
as they are not invariant by the opening $\overrightarrow{\gamma}_{e}.$ As
each connected component contains only one regional minimum, we can stop the
pruning and we obtain the final watershed partition.\ This partition is
indicated in false color on top of the flooding graph $G$ in fig.\ref{wshdex9}%
E and also in false color on top of the complete initial graph in
fig.\ref{wshdex9}F.%
%TCIMACRO{\FRAME{ftbpFU}{4.6542in}{2.6442in}{0pt}{\Qcb{A: An edge weighted
%graph.\ The edges in red are the regional minima edges.\newline B: The
%associated flooding graph $G$\newline C: The oriented flooding graph \newline
%D: Applying the operator $\zeta$ to the oriented flooding graph suppresses two
%edges, and leaves 4 connected components, containing each a regional
%minimum.\newline E: The resulting partition superimposed on the flooding
%graph.\newline F: The resulting partition superimposed on the initial edge
%weighted graph}}{\Qlb{wshdex9}}{wshdex9.eps}%
%{\special{ language "Scientific Word";  type "GRAPHIC";
%maintain-aspect-ratio TRUE;  display "USEDEF";  valid_file "F";
%width 4.6542in;  height 2.6442in;  depth 0pt;  original-width 14.5487in;
%original-height 8.2283in;  cropleft "0";  croptop "1";  cropright "1";
%cropbottom "0";  filename 'wshdex9.eps';file-properties "XNPEU";}} }%
%BeginExpansion
\begin{figure}
[ptb]
\begin{center}
\includegraphics[
height=2.6442in,
width=4.6542in
]%
{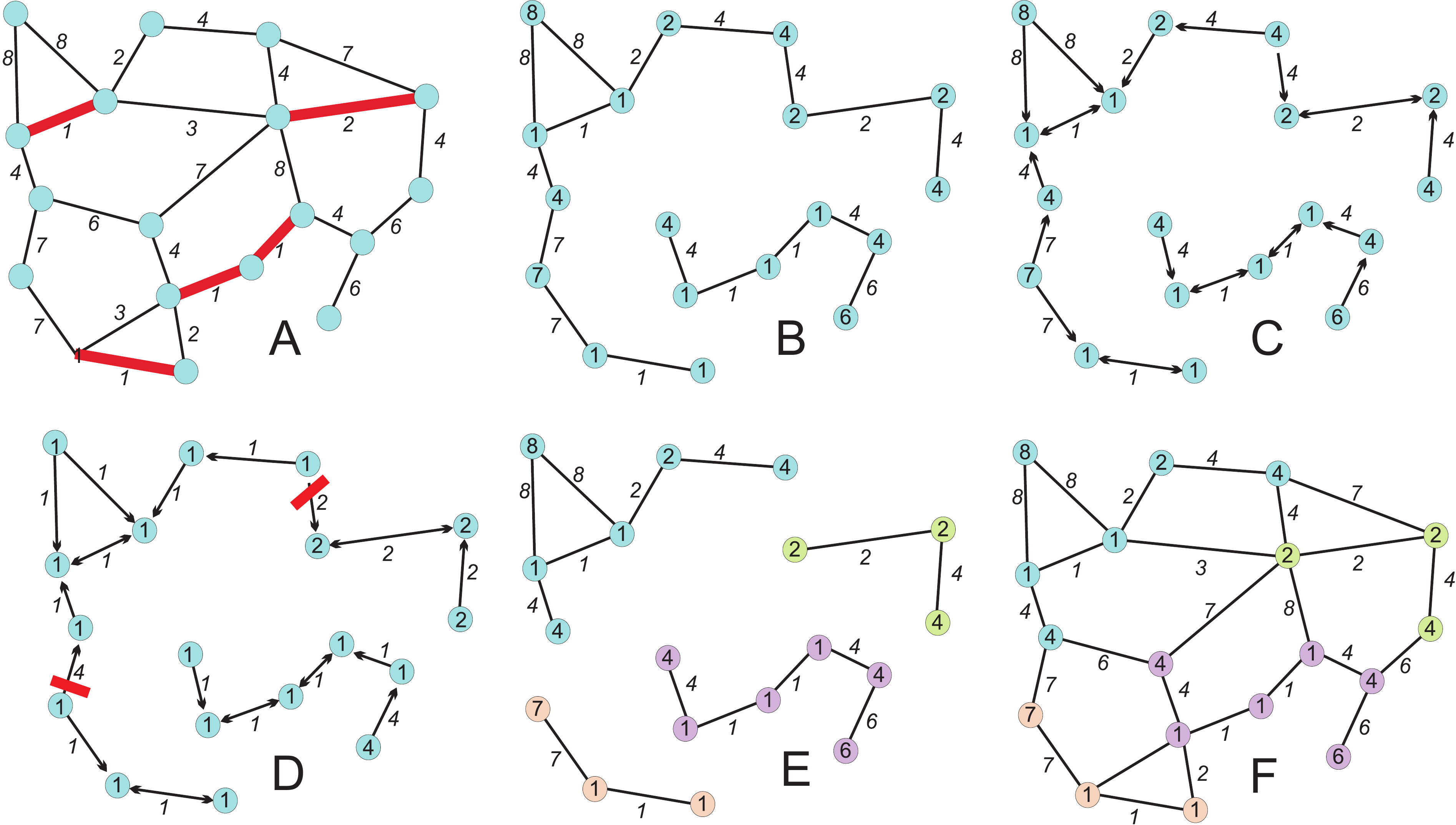}%
\caption{A: An edge weighted graph.\ The edges in red are the regional minima
edges.\newline B: The associated flooding graph $G$\newline C: The oriented
flooding graph \newline D: Applying the operator $\zeta$ to the oriented
flooding graph suppresses two edges, and leaves 4 connected components,
containing each a regional minimum.\newline E: The resulting partition
superimposed on the flooding graph.\newline F: The resulting partition
superimposed on the initial edge weighted graph}%
\label{wshdex9}%
\end{center}
\end{figure}
%EndExpansion

\paragraph{Case of an initially node weighted graph}

Fig.\ref{wshdex10}A presents a node weighted graph.\ It contains a number of
some difficulties, like the presence of 2\ plateaux and of a buttonhole at the
node with weight $3.$ In a first step we add loop edges linking each isolated
regional minimum with itself as shown in fig.\ref{wshdex10}B.\ The next step
produces edge weights by the dilation $\delta_{en}$ of the node weights,
yielding the initial flooding graph $G$ in fig.\ref{wshdex10}C The oriented
graph $\overrightarrow{G}$ is produced in fig.\ref{wshdex10}D.\ With each new
applications of the operator $\zeta$ the graph $\overrightarrow{G}$ is further
pruned, yielding the graphs of fig.\ref{wshdex10} E,F and
G.\ Fig.\ref{wshdex10}H transports the arrows of the graph fig.\ref{wshdex10}G
onto the graph $\overrightarrow{G}.\ $The remaining flooding tracks are
clearly visible, on a graph which has kept the initial node weights.

\
%TCIMACRO{\FRAME{ftbpFU}{4.6534in}{4.2648in}{0pt}{\Qcb{A: A node weighted
%graph.\ \newline B: A loop edge is added linking each isolated regional
%minimum with itself\newline C: The dilation $\delta_{en}$ assigns weights to
%the edges and produces the flooding graph $G$\newline C: The flooding graph is
%transformed into an oriented flooding graph \newline E,F,G: Applying three
%times the operator $\zeta$ to the oriented flooding graph.$\ \ $Each new
%application suppresses new edges, finally leaving flooding paths and chains
%with a lexicographic steepness of $4.\ $It leaves 4 connected components,
%containing each a regional minimum.\newline E,F,G: The resulting partition
%superimposed on the flooding graph\newline F: The resulting partition
%superimposed with the oriented flooding graph}}{\Qlb{wshdex10}}{wshdex10.eps}%
%{\special{ language "Scientific Word";  type "GRAPHIC";
%maintain-aspect-ratio TRUE;  display "USEDEF";  valid_file "F";
%width 4.6534in;  height 4.2648in;  depth 0pt;  original-width 6.3835in;
%original-height 7.3624in;  cropleft "0";  croptop "1";  cropright "1";
%cropbottom "0";  filename 'wshdex10.eps';file-properties "XNPEU";}} }%
%BeginExpansion
\begin{figure}
[ptb]
\begin{center}
\includegraphics[
height=4.2648in,
width=4.6534in
]%
{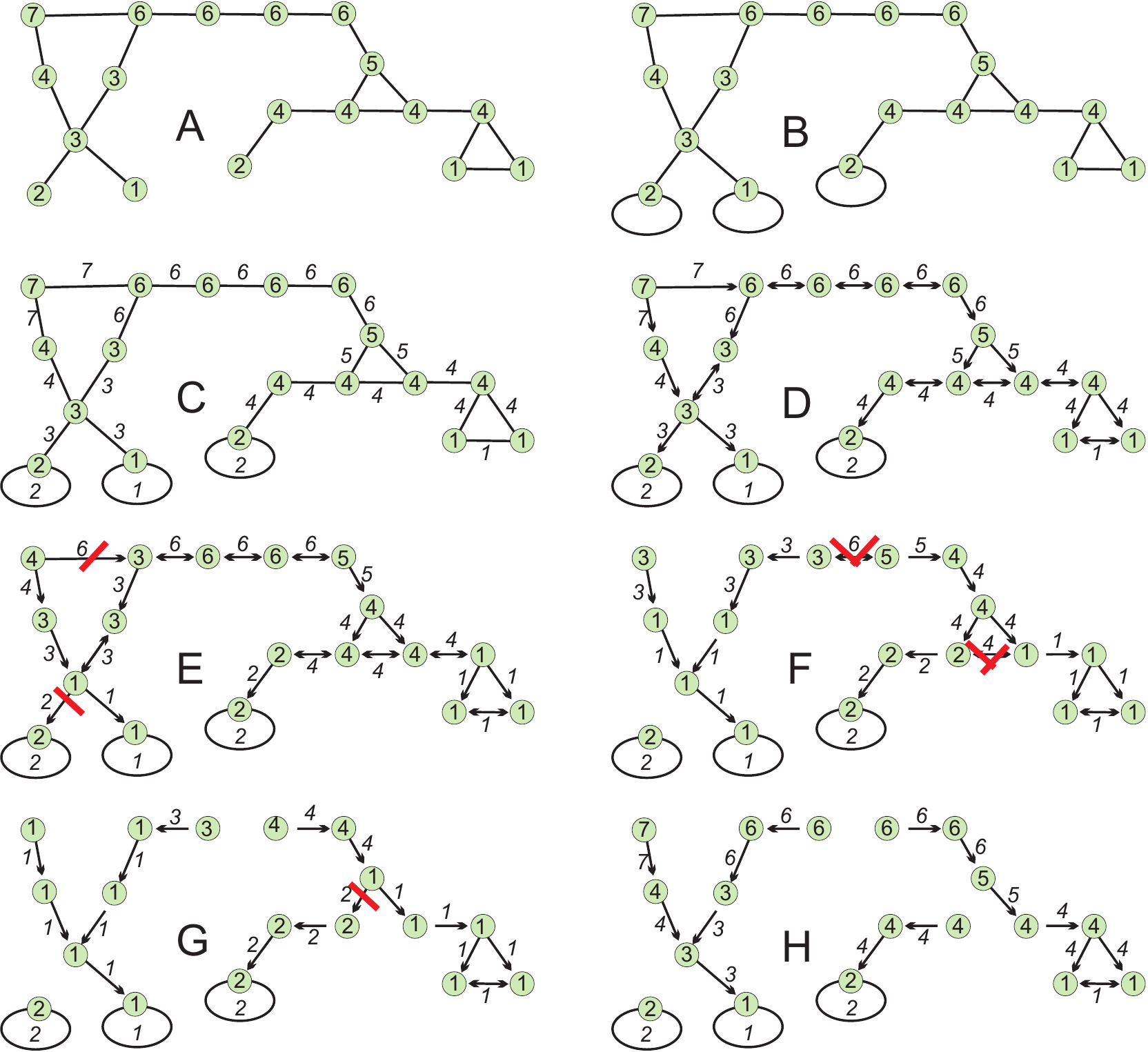}%
\caption{A: A node weighted graph.\ \newline B: A loop edge is added linking
each isolated regional minimum with itself\newline C: The dilation
$\delta_{en}$ assigns weights to the edges and produces the flooding graph
$G$\newline C: The flooding graph is transformed into an oriented flooding
graph \newline E,F,G: Applying three times the operator $\zeta$ to the
oriented flooding graph.$\ \ $Each new application suppresses new edges,
finally leaving flooding paths and chains with a lexicographic steepness of
$4.\ $It leaves 4 connected components, containing each a regional
minimum.\newline E,F,G: The resulting partition superimposed on the flooding
graph\newline F: The resulting partition superimposed with the oriented
flooding graph}%
\label{wshdex10}%
\end{center}
\end{figure}
%EndExpansion

\section{Introducing arbitrary choices when needed}

\subsection{Arbitrary choices on the flooding graph}

We now summarize the results of this study. Starting from a node or edge
weighted graph, we want to construct a watershed partition, without arbitrary
choices, or with a minimum number of arbitrary choices. In a first step we
derive the flooding graph.\ An edge weighted graph looses the edges which are
not the lowest neighboring edges of one of their extremities, but keeps the
edge weights. The node weights are obtained by the erosion $\varepsilon
_{ne}.\ $To a node weighted graph are added some loops linking each isolated
regional minimum with itself.\ The node weights keep their initial weights and
the edge weights are obtained by the dilation $\delta_{en}.$ In both cases we
obtain a graph with a perfect coupling between edge and node weights ;
furthermore, the edge regional minima span the node regional minima. And the
flooding chains span the flooding paths, each node being followed by an edge
with the same weight.\ 

\textbf{Method 1: }The ties are broken by a watershed algorithm applied on the
flooding graph.\ Any algorithm of the literature developed for node
(resp.\ edge) weighted graphs may now be applied to the flooding graph, even
if the initial graph is an edge (resp.\ node) weighted graph. The algorithm of
B.Marcotegui et al \cite{waterfallsbs} for constructing the watershed on an
edge weighted graph is derived from Prim's algorithm for constructing a
minimum spanning tree or forest.\ This algorithm is myopic and considers only
flooding chains with a lexicographic depth equal to 1. The algorithms proposed
in \cite{Coustywshedcut} are or the same myopic type.\ Using an algorithm
\cite{meyer91} based on the topographic distance \cite{meyer94}%
,\cite{Najman199499}, and applied to node weighted graphs, choses flooding
chains with a lexicographic depth equal to 2, and for this reason are more
selective.\ Such an algorithm, designed for node weighted graphs may now be
applied to a graph which initially was edge weighted.\ The arbitrary choices
for producing a watershed partition is taken in charge by the watershed
algorithms applied to the flooding graph. In the case of \cite{meyer91} or
\cite{waterfallsbs}, by the scheduling of the shortest distance algorithm
(ultrametric flooding distance for node weighted graphs or topographic
distance for node weighted graphs). These are only a few examples of
algorithms which may be used, among a large number of others.

\subsection{Arbitrary choices after choiceless prunings of the flooding graph}

We have seen how to reduce the number of flooding tracks and paths.\ The
flooding graph $G$ is transformed into an oriented graph $\overrightarrow
{G}.\ $The operator $\zeta$ prunes further this graph, again without arbitrary
choices.\ After $k$ iterations, only the flooding paths or flooding tracks of
the graph $G$ with a lexicographic depth $k+1$ remain. We then "transport" the
arrows of the graph $\zeta^{(k)}\overrightarrow{G}$ onto the flooding graph
$G$ and get a graph $G^{(k)}:$ we keep an edge $(p,q)$ of $G$ if and only if
there exists an arrow $\overrightarrow{pq}$ or $\overrightarrow{qp}$ in the
graph $\zeta^{(k)}\overrightarrow{G}.\ $Like that we obtain a flooding graph
$G$ in which the only remaining flooding tracks and paths have a steepness
$\geq k+1.$ A node $p$ belongs to two catchment basins, if there exists two
flooding paths towards the corresponding minima, with identical $k+1$ first
edges forming a track of $k+1$ maximal steepness. Fig.\ref{wshdex11} presents
4 pruning states of the node weighted graph in fig.\ref{wshdex10}%
A.\ Fig.\ref{wshdex11}A presents the associated flooding graph (without the
loops on the isolated regional minima), where the edge weights have been
added. Two types of methods may be applied to this graph.\ 

\textbf{Method 2: }After $k$ steps of pruning, the graph $G^{(k)}$ has only
flooding paths and chains of steepness $\geq k+1.\ $Again any algorithm
developed for node or edge weighted graph may be applied to this graph.\ But
as the remaining flooding paths are extremely scarce, even the loosest and
most myopic algorithms will do a good job. In particular the algorithm by
B.Marcotegui which normally selects flooding paths of steepness $\geq1$ now
selects paths of steepness $\geq k+1,$ as they are the only available.\ The
same is true for the algorithm based on the topographic distance, selecting
flooding paths of steepness $\geq2.$ The remaining arbitrary choice, if any,
is taken in charge by the scheduling of the shortest distance algorithm
(ultrametric flooding distance for node weighted graphs or topographic
distance for node weighted graphs).%

%TCIMACRO{\FRAME{ftbpFU}{4.6534in}{2.0332in}{0pt}{\Qcb{Four graphs with
%flooding paths of increasing steepness and decreasing overlapping zones of the
%catchment basins.}}{\Qlb{wshdex11}}{wshdex11.eps}%
%{\special{ language "Scientific Word";  type "GRAPHIC";
%maintain-aspect-ratio TRUE;  display "USEDEF";  valid_file "F";
%width 4.6534in;  height 2.0332in;  depth 0pt;  original-width 6.2798in;
%original-height 2.7206in;  cropleft "0";  croptop "1";  cropright "1";
%cropbottom "0";  filename 'wshdex11.eps';file-properties "XNPEU";}} }%
%BeginExpansion
\begin{figure}
[ptb]
\begin{center}
\includegraphics[
height=2.0332in,
width=4.6534in
]%
{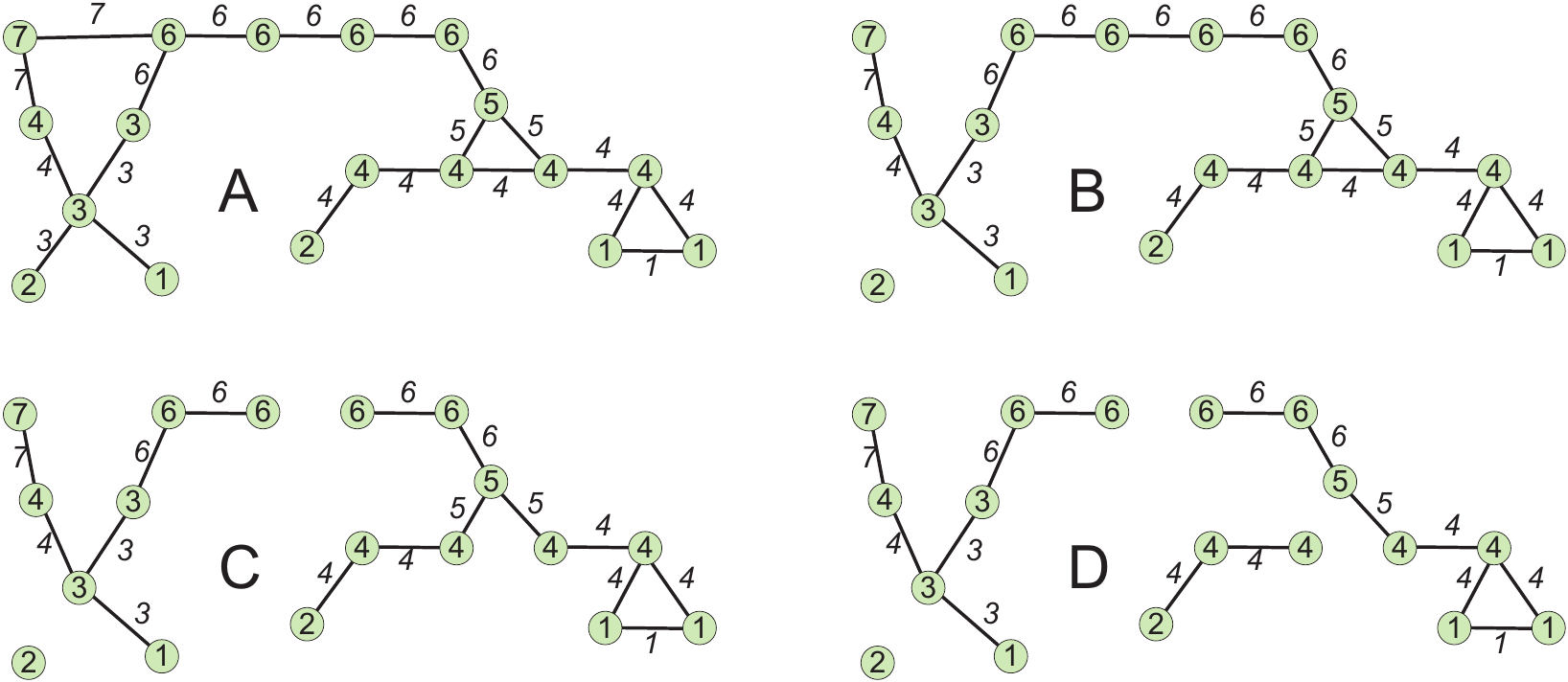}%
\caption{Four graphs with flooding paths of increasing steepness and
decreasing overlapping zones of the catchment basins.}%
\label{wshdex11}%
\end{center}
\end{figure}
%EndExpansion

\textbf{Method 3: }We continue the pruning $\zeta^{(k)}\overrightarrow{G}$
until there remain no arrows which are head to tail, except in the regional
minima.\ This is the case in fig.\ref{wshdex10}F.\ There exists a node with
weight $4$ which is the origin of two arrows with weight $4.\ $If we
arbitrarily suppress one of them, we are also done and have a partition. There
are two solutions possible at this stage of pruning. Thus if a node is the
origin of several arrows, one leaves only one.\ (this method has often been
used in hardware implementations, but without preliminary pruning
\cite{Bieniek2000907},\cite{lemonth}).

\subsection{Maximal prunings with scarcely needed choices}

For $k=\infty,$ we obtain a maximal pruning of the flooding graph.\ In fact we
may stop as soon the graph is cut into a number of components containing each
one regional minimum.\ The node and edge weights remain identical as in the
graph $G,$ but there remains only a minimal number of flooding paths and
chains. A node will be linked to two distinct minima by two flooding chains
only if there exists two paths with exactly the same weight distributions
towards these two minima ; this will rarely happen in natural images. If the
minima have distinct weights, each node is linked with one and only one
minimum by a flooding path of maximal steepness.

\textbf{Method 4: }We continue the pruning $\zeta^{(k)}\overrightarrow{G}$
until each connected component contains only one regional minimum and we are
done.\ If this cannot be achieved, then we resort to method 3.\ After an
infinite number of prunings, $\zeta^{(\infty)}\overrightarrow{G}$ does not
contain any head and tail arrows and method 3 can be applied.

\textbf{Method 5: }We slightly change the weights of the minima so that they
are all distinct.\ We continue the pruning $\zeta^{(k)}\overrightarrow{G}$
until each connected component contains only one regional minimum and we are
done.\ We know that this will happen for $k<\infty.$

\section{Conclusion}

We have established that node and weighted graphs represent the same
topography, with the same minima and the same catchment basins. We have
presented a method to reduce the overlapping zones of the catchment basins
without arbitrary choices. We finally have presented how to introduce such
arbitrary choices if needed.

\end{document}